\newtheorem{theorem}{Theorem}
\journal{Nuclear Physics B}
\begin{document}

\begin{frontmatter}



\title{Pruning and Malicious Injection: A Retraining-Free Backdoor Attack on Transformer Models} 


\author[1]{Taibiao Zhao}
\ead{tzhao3@lsu.edu}
\author[1]{Mingxuan Sun\corref{cor1}}
\ead{msun11@lsu.edu}
\author[3]{Hao Wang}
\ead{hwang9@stevens.edu}
\author[2]{Xiaobing Chen}
\ead{xchen87@lsu.edu}
\author[2]{Xiangwei Zhou}
\ead{xwzhou@lsu.edu}
\affiliation[1]{organization={Division of Computer Science and Engineering, Louisiana State University},
            addressline={Baton Rouge}, 
            city={Baton Rouge},
            postcode={70803}, 
            state={Louisiana},
            country={USA}}
\affiliation[2]{organization={Division of Electrical and Computer Engineering, Louisiana State University},
            addressline={Baton Rouge}, 
            city={Baton Rouge},
            postcode={70803}, 
            state={Louisiana},
            country={USA}}
\affiliation[3]{organization={School of Engineering and Science, Stevens Institute of Technology},
            addressline={Castle Point Terrace}, 
            city={Hoboken},
            postcode={07030}, 
            state={New Jersey},
            country={USA}}
\cortext[cor1]{Corresponding author}
\begin{abstract}
Transformer models have demonstrated exceptional performance and have become indispensable in computer vision (CV) and natural language processing (NLP) tasks. However, recent studies reveal that transformers are susceptible to backdoor attacks. Prior backdoor attack methods typically rely on retraining with clean data or altering the model architecture, both of which can be resource-intensive and intrusive. In this paper, we propose Head-wise Pruning and Malicious Injection (HPMI), a novel retraining-free backdoor attack on transformers that does not alter the model's architecture. Our approach requires only a small subset of the original data and basic knowledge of the model architecture, eliminating the need for retraining the target transformer. Technically, HPMI works by pruning the least important head and injecting a pre-trained malicious head to establish the backdoor. We provide a rigorous theoretical justification demonstrating that the implanted backdoor resists detection and removal by state-of-the-art defense techniques, under reasonable assumptions. Experimental evaluations across multiple datasets further validate the effectiveness of HPMI, showing that it 1) incurs negligible clean accuracy loss, 2) achieves at least 99.55\% attack success rate, and 3) bypasses four advanced defense mechanisms. Additionally, relative to state-of-the-art retraining-dependent attacks, HPMI achieves greater concealment and robustness against diverse defense strategies, while maintaining minimal impact on clean accuracy.
\end{abstract}



\begin{keyword}
Backdoor attack, transformer, malicious head, retraining-free



\end{keyword}

\end{frontmatter}


\section{Introduction}
Transformer networks~\citep{vaswani2017attention} have become state-of-the-art in machine learning, with variations like BERT~\citep{devlin2018bert}, GPT~\citep{gpt,gpt2,achiam2023gpt4}, ViT~\citep{dosovitskiy2020image}, Swin Transformer~\citep{liu2021swin}, and DeiT~\citep{touvron2021training} widely used in natural language processing (NLP) and computer vision (CV) tasks. 
The excellent performance, however, demands an ever rising amount of computing power increasing exponentially with the model size and thus escalating energy costs and carbon emissions. Pre-training large models demands substantial computational resources, posing significant challenges for users with limited hardware or funding. For example, LLaMA-65B was trained with 2048 A100 GPUs in a period of 21 days and took an estimated cost of \$4 million\footnote{The estimation is based on an average cost of \$3.93 per A100 GPU per hour on Google Cloud Platform: https://cloud.google.com/.}. As a result, most users rely on pre-trained model checkpoints, often downloaded from third-party sources, which could be compromised with malicious backdoors.

Recent research has proposed numerous backdoor attacks targeting conventional deep neural networks. Broadly, existing backdoor attacks typically involve one of the following approaches: retraining the target model with a revised loss function~\citep{gu2017badnets,yuan2023you,saha2020hidden}, generating effective and undetectable triggers~\citep{chen2017targeted,barni2019new,liu2020reflection}, or modifying the architecture of the target model~\citep{tang2020embarrassingly,bober2023architectural}. However, backdoor attacks on transformers remain relatively underexplored. For instance, BadVit~\citep{yuan2023you} introduced an invisible patch-wise trigger by retraining the target model to maximize the attention score of the trigger-marked patch. A data-free backdoor attack on ViTs was proposed in~\citep{lv2021dbia}, which optimizes the trigger on a surrogate dataset. Additionally, weight poisoning through embedding surgery was proposed in~\citep{kurita2020weight}, where trigger embeddings are directly modified. A layer-wise weight poisoning attack using combinational triggers, as introduced in~\citep{li2021backdoor}, plants deeper backdoors. However, these approaches which require retraining the target transformer models often involve extensive computational overhead and are vulnerable to removal through fine-tuning on clean data.

In this paper, we propose a novel retraining-free backdoor attacker, Head Pruning and Malicious Injection (HPMI),  that implements a backdoor within a pre-trained transformer while keeping the model architecture unchanged. At a high level, HPMI first prunes the least important head in the multi-head attention module by iteratively evaluating and removing each head. Next, a pre-trained malicious transformer with a single head is injected into the position of the pruned head. This backdoored transformer predicts backdoored inputs as the target class while maintaining the accuracy of clean inputs. The process involves three key steps: 

Pruning and Reconnection: To facilitate the injection of the malicious head, we ensure that the pruned neurons in the target transformer are reconnected to form a valid single-head transformer.
Pre-training the Malicious Head: The malicious head is pre-trained on a balanced dataset consisting of clean and backdoored samples as two classes. It activates backdoored behavior for backdoored inputs while remaining dormant for clean inputs.
Head Injection and Output Layer Adjustment: The malicious head is then injected into the position of the pruned head. Finally, the weights of the output layer corresponding to the class token are adjusted to ensure that the output of the malicious head positively influences the target class and negatively impacts non-target classes.

Our paper includes both formal analysis and empirical testing of HPMI. Theoretically, We show that the backdoors implanted by HPMI evade leading detection methods, including Neural Cleanse~\citep{wang2019neural}, and are resistant to removal via fine-tuning. In our experiments, HPMI is tested across transformer models with diverse architectures trained on standard benchmarks for image classification and sentiment analysis. Our results demonstrate that HPMI achieves attack success rates exceeding 99.55\% across all datasets and models, while incurring acceptable accuracy loss on clean inputs. Additionally, HPMI successfully bypasses four state-of-the-art defenses and exhibits greater resilience to these defenses compared to a state-of-the-art retraining-based backdoor attack~\citep{gu2017badnets}. To the best of our knowledge, HPMI is the first retraining-free backdoor attack on transformers and offers theoretical guarantees for its effectiveness against existing defenses.

Our main contributions are outlined below: 

$\bullet$ We introduce HPMI, the first backdoor attack on transformers that eliminates the need for retraining and preserves the model's original architecture. HPMI modifies specific transformer parameters to implant a malicious behavior.

$\bullet$ We conduct a formal theoretical analysis of HPMI. Our results indicate that HPMI is resistant to detection or removal by several leading defense techniques.

$\bullet$ We carry out extensive evaluations on standard benchmark datasets to demonstrate both the performance and efficiency of HPMI.

$\bullet$ We experimentally assess HPMI against state-of-the-art defense strategies and observe that they are largely ineffective.

\section{Related Work}
\textbf{Backdoor attacks in DNNs.} 
BadNets~\citep{gu2017badnets} has been initially proposed to backdoor attack on Deep Neural Networks (DNNs) in image classification. This typically involves injecting specific patterns or features into the training data. Since the differences between poisoned and benign inputs in BadNets are discernible even to the human eye, numerous studies have proposed more subtle backdoor attacks with smaller perturbations to inputs~\citep{chen2017targeted,zhong2020backdoor,shafahi2018poison,chen2021badnl,lin2020composite,li2021hidden}, while preserving the clean label~\citep{saha2020hidden}. Most of these backdoor attacks are conducted under an unrealistic setting, where access to the original dataset and the training process are assumed. 
Then, researchers proposed more realistic attacks by weight poisoning~\citep{qi2022towards,hong2022handcrafted,caodata}, which only requires knowledge of the model architectures. However, these methods create several backdoor neurons per layer in conventional neural networks can not be applied in transformers. In transformers, the attention mechanism spreads the representation across multiple neurons, heads, and layers. This distributed representation makes transformers inherently more resilient to pruning. Fine-grained pruning (e.g., pruning single neurons) in transformers has limited impact due to redundancy.

\textbf{Backdoor Attacks in Transformers.} Earlier studies suggested that transformers are more robust than CNNs under attack. However, adding patch-wised perturbations~\citep{fu2022patch} can effectively attack transformers based on the self-attention mechanism of input tokens. Similar to the development of BadNets, researchers focus on developing efficient triggers and manipulating model parameters. An invisible, universal, patch-wise trigger is introduced in~\citep{yuan2023you} for backdoor attacks on ViTs.
A data-free backdoor attack on ViTs is proposed in~\citep{lv2021dbia}, where the backdoor was injected into a surrogate dataset. Weight poisoning attack on BERT in~\citep{kurita2020weight} is proposed to attack transformers by model weight poisoning with embedding surgery.
However, these existing attacks on transformers are unrealistic, either easily detected by defense methods or required retraining the target model. 
In this work, we propose a retraining-free backdoor attack on transformers, which is undetectable or unremovable by defense methods. 

\textbf{Defense.} To prevent models from attacking, abundant work on defense has been done for backdoor detection and mitigation. Neural Cleanse~\citep{wang2019neural} identifies potential backdoor models by generating reversed triggers for all labels and selecting the optimal trigger through outlier detection. 
Fine pruning~\citep{liu2018fine}, mitigates backdoor behavior by pruning neurons that are dormant for benign inputs, under the assumption that neurons activated by trojan inputs remain inactive for benign ones. 
STRIP~\citep{gao2019strip} identifies trojan inputs that exhibit low randomness in predicted classes when perturbed by superimposing various benign images. While applying to the NLP field, STRIP perturbed the potential poisoning sentences by swapping them with clean sentences. RAP~\citep{yang2021rap} constructs a perturbed word embedding for the selected RAP trigger. For clean samples with the RAP trigger, the output probability of the target class drops more than a chosen threshold, while for poisoning samples with the RAP trigger, it drops less than this threshold. Evidence provided in Section~\ref{Defense analysis} demonstrates that our attack remains robust against all the aforementioned defenses.

\begin{figure*}[t]
\centering
\includegraphics[width=\linewidth]{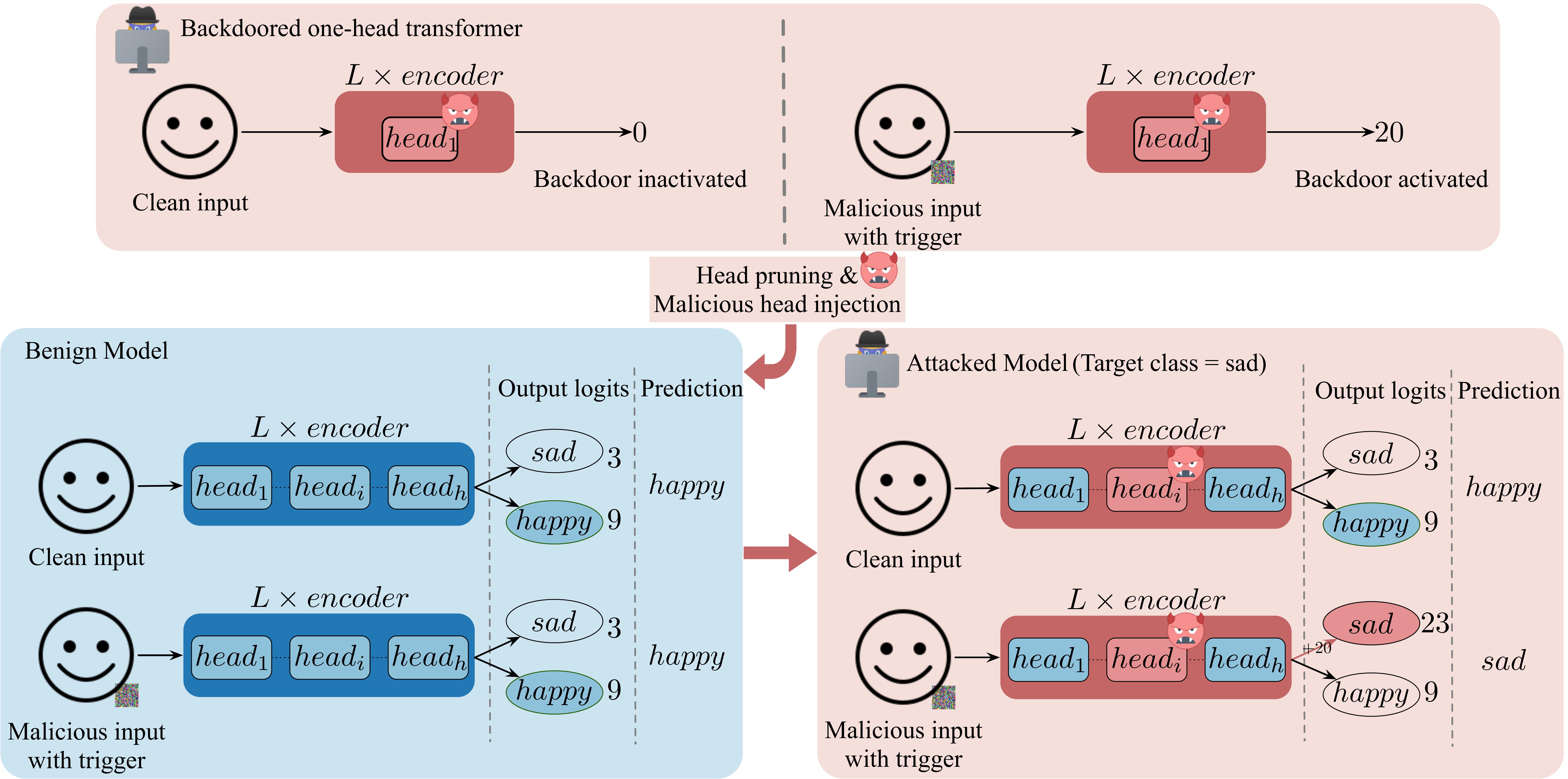}
\caption{Overview of proposed HPMI. Based on the architecture of the target model, the attacker trains a backdoored one-head transformer, which activates for the backdoored inputs with a trigger (e.g., a random noise patch) and stays inactive for clean input. Then, the attacker iteratively prunes one head from the target transformer and evaluates the model performance to identify the least important one to be replaced. Finally, the attacker injects the malicious head and connects the output of the malicious head to the output of the target class (e.g., "sad"). As outlined above, after head pruning and malicious injection, the backdoored input can activate backdoor behavior, while the attacked model continues to recognize clean inputs normally. This procedure is formally introduced in Section \ref{section: Method}.}
\label{overview}
\vspace{-1em}
\end{figure*}

\section{Problem formulation}
\subsection{Problem setup}\label{sub:gradientAttack}
The backdoor attack on Deep Neural Networks (DNNs) is introduced in BadNets~\citep{gu2017badnets} by injecting backdoored samples stamped with triggers into the training set and retraining the target models on the poison dataset. 
Formally, a pre-trained deep neural network classifier $f$ parameterized by trainable weights $w$. A function $\phi(x)$ generates poisoned inputs $\tilde{x}$ by blending the raw inputs $x$ and the trigger $t$ using a mask $m$. The backdoored inputs can be expressed as: 
\begin{equation}
    \tilde{x} = \phi(x) = (1-m) \times x + m \times t.
\label{trojan input generation}
\end{equation}
Once the training process is complete, the model will associate the specific trigger pattern with the corresponding target label. During the inference stage, any input as defined in Equation~\ref{trojan input generation} will be classified as the target class. 

\subsection{Threat model}
\textbf{Attacker's goals:} We assume that the adversary seeks to embed a backdoor into a pre-trained transformer without performing additional retraining or altering its original architecture. The compromised model should retain its standard behavior on clean inputs to ensure functional integrity. Furthermore, the backdoor must remain inconspicuous, evading current detection and removal methods.

\textbf{Attacker’s background knowledge and capability:} Consistent with prior work~\citep{caodata,hong2022handcrafted,liu2018trojaning}, we model the attacker as one who compromises the model during its supply chain and possesses full internal access to the pre-trained parameters. Differently, we do not try to optimize a specific trigger, but take the vanilla triggers for generalization. We further assume that the adversary is unable to modify the architecture of the pre-trained model. Additionally, the attacker is considered to have no access to the training setup, including algorithms and hyperparameter settings. These constraints, as noted earlier, enhance the feasibility and real-world applicability of our attack design.

\subsection{Design Goals}
In designing our attack, we target the following objectives: utility, effectiveness, efficiency, and stealthiness.

$\bullet$ Utility goal: The backdoored model should retain high classification performance on clean test inputs. That is, its behavior on benign data must remain consistent with that of the original model.

$\bullet$ Effectiveness goal: The model should reliably output the attacker-specified target label when the designated backdoor trigger is present in the input.

$\bullet$ Efficiency goal: The attack should enable efficient construction of a backdoored model using an existing clean pre-trained classifier. Achieving this goal increases the feasibility of deploying the attack in practical scenarios.

$\bullet$ Stealthiness goal: The attack must remain concealed from current state-of-the-art defense mechanisms. A successful stealthy attack is harder to detect or neutralize, making it more potent in real-world settings.

We conduct both theoretical and empirical evaluations to assess our method under existing state-of-the-art  defenses.

\section{Methodology}\label{section: Method}
Given our assumptions that rule out retraining or modifying the transformer's architecture, the only viable strategy for implanting a backdoor is by directly altering its internal parameters. The core mechanism of our method involves establishing a backdoor pathway—referred to as a backdoor channel—that spans from the input to the output layer. This channel is designed to meet two criteria: 1) it is reliably triggered by inputs containing the backdoor pattern, ensuring that the classifier consistently outputs the attacker-specified target label, and 2) it remains inactive for clean inputs in most cases, thereby preserving stealth.
\begin{wrapfigure}{r}{0.5\textwidth}
\centering
\includegraphics[width=0.48\textwidth]{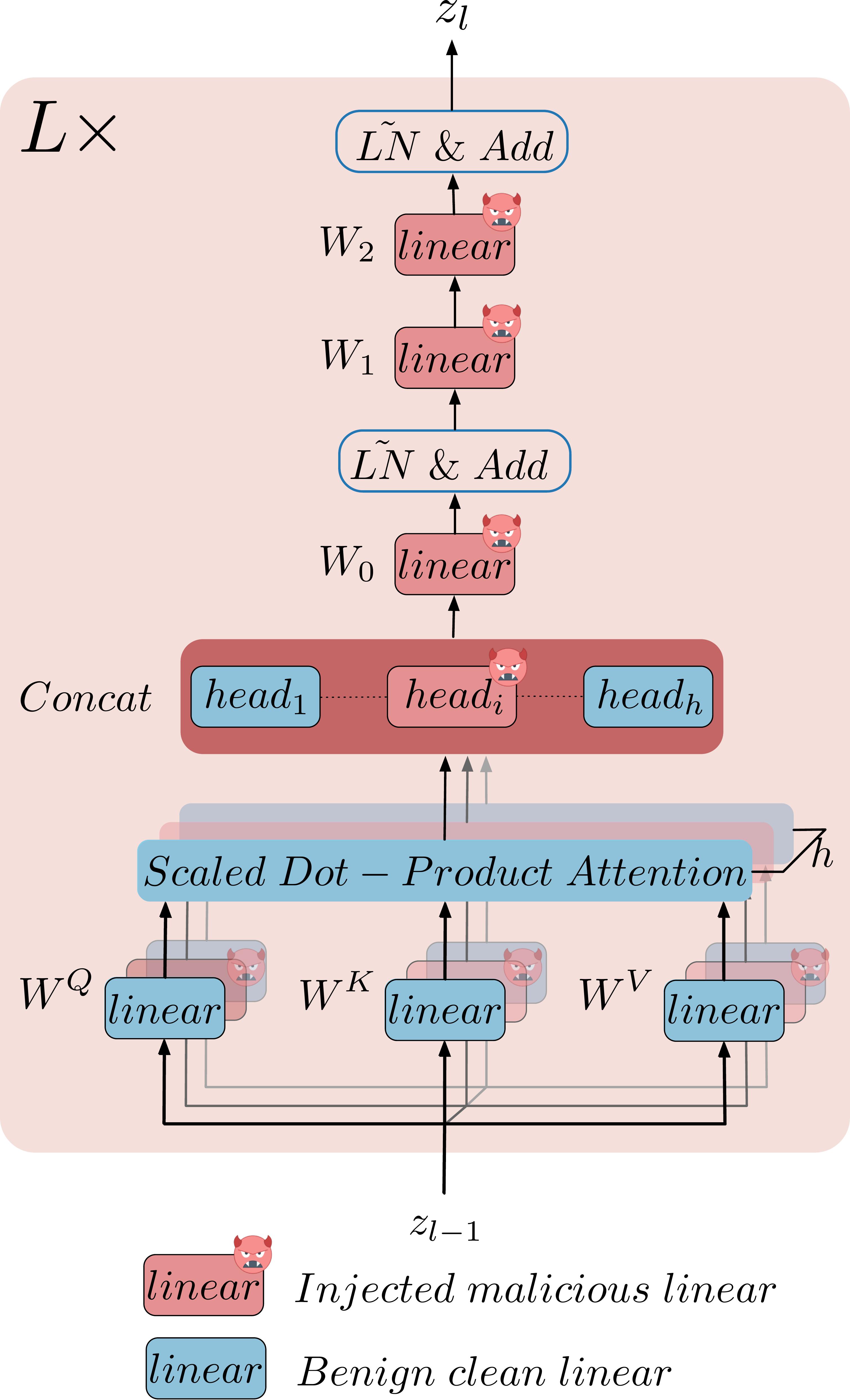}
\caption{The backdoor is injected by inserting poisoning parameters to the pruned neurons in multi-head self-attention and other modules. We revise the LayerNorm into $\tilde{LN}$ as in Equation~\ref{eq:LN}.}
\label{Malicious_head_injection}
\end{wrapfigure}

The key challenge lies in constructing a backdoor pathway that fulfills both required conditions at once. To tackle this, we introduce a backdoor attack that with Head Pruning and Malicious Injection as in Figure~\ref{overview}. We prune one of the multi-head and its corresponding modules in the subsequent feed-forward module in each transformer block. Besides, we also prune the corresponding positional encoding and revise the layer normalization and residual connection. Then, we reconnect the pruned neurons and inject the outsourced pretrained malicious transformer with one head into the pruned position. In the final step, we adjust the weights in the output layer so that the signal from the last neuron in the backdoor path exerts a positive (or negative) influence on the activation of neurons associated with the target (or non-target) class to reach our goal.

\subsection{Target Head Pruning}
\label{section:head pruning}

Not all heads are equally important in the multi-head self-attention module, and occasionally, pruning some heads can even enhance model performance~\citep{voita2019analyzing}. Therefore, before injecting a malicious head into the target model, we identify the least important head in the multi-head attention, to minimize the negative impact on model performance. 

Formally, as shown in Figure~\ref{Malicious_head_injection}, we first describe a transformer encoder (e.g., Vit-B) with $L$ (e.g., $L=12$) layers and $h$ heads per layer. 
The scaled dot-product attention is:
\begin{equation}
 {Attn}(Q,K,V) = \text{softmax}\left(\frac{Q\cdot K^{\top}}{\sqrt{d}}\right)\cdot V.
\end{equation}
The output of the $l_{th}$ encoder layer $z_{l}$ is computed from input $z_{l-1}$ with size of $d$ by: 
\begin{equation}
\begin{aligned}
    head_{i}^{l}  &= {Attn}\left(z_{l-1}\cdot{W}_{i}^{Q},  z_{l-1}\cdot{W}_{i}^{K},  z_{l-1}\cdot{W}_{i}^{V}\right), \\
    z_{l}^{'}  &= LN\left(\left(head_{1}^{l},\ldots,head_{h}^{l}\right)\cdot  {W}_{0} + {b}_{0}\right) + z_{l-1}, \\
    z_{l}  &= LN\left(\left(z_{l}^{'}\cdot{W}_{1} + {b}_{1}\right)\cdot{W}_{2} + {b}_{2}\right) + z_{l}^{'}, \\
\end{aligned}
\label{encodingBlock}
\end{equation}
where ${W}_i^Q$, ${W}_{i}^{K}$, and ${W}_{i}^{V}$ are the linear projection matrices at $head_{i}$ in $l_{th}$ layer to transform the input from last layer to query $Q$, key $K$, and value $V$, respectively. 
Note that we drop the layer index $l$ for all ${W}$ matrices for simplicity. Moreover, ${W}_{0}$ and ${b}_{0}$ are the linear projection parameters in attention module to project the concatenated output of multiple heads, $z_{l}^{'}$ is the output of the first norm and residual, and ${W}_{1}$, ${b}_{1}$, ${W}_{2}$, and ${b}_{2}$ are parameters for two consecutive linear projections in the feed-forward network in $l_{th}$ encoder layer. Finally, the $L$ repetitive encoding layers are followed by an MLP classifier with weights ${W}^{f}$ and ${b}^{f}$. The output of the transformer is softmax function of $\left(z_L\cdot{W}^{f}\right)+{b}^{f}$.

Since the order of the multiple heads is consistent across all $L$ encoder layers, we prune the same $head_{i}$ in every encoder layer to ensure that the pruned neurons in the target transformer are reconnected to form a valid single-head transformer. In encoder layers, LayerNorm($LN$) is applied after the attention module and the feed-forward module. Since the $LN$ conducts the normalization over features, we have to cut off the connection between the pruned and benign features. We convert the layer norm from $LN\left(head_{1,\ldots,h}\right)$ to
\begin{equation}
\begin{aligned}
    \tilde{LN} = Concat\Big(LN\left(head_{1,\ldots,i-1}\right), LN\left(head_i\right), LN\left(head_{i+1,\ldots,h}\right)\Big),
\label{eq:LN}
\end{aligned}
\end{equation}
where layer norm independently applies to three segments. Then, we evaluate the classification performance after head pruning on validation. By iteratively pruning every head, we can find the least important \textbf{$head_i$} with the least negative impact on model performance after pruning it. Then we inject our malicious head into the position of \textbf{$head_i$}.

\subsection{Malicious Head Generation}
\label{section: Malicious head}

The malicious head $G^{*}(x;w^{*})$ is a transformer with the same number of layers as the target model, and only contains one head in the self-attention module. The malicious head satisfies these conditions: 

$\bullet$ For clean input $x$, the malicious head is unlikely to be activated.

$\bullet$ For backdoored input $\tilde{x}$, the malicious head is activated.

Therefore, we have to ensure the malicious head can be activated by any backdoored inputs and the activation only realated to the trigger. Besides, the activation can change the prediction of backdoored inputs as the target class. To achieve these, we make the weights of output projection layer $f^{*}$ to the target class as ones and others as zeros. The objective of malicious head generation is as follows: 
\begin{align}
    \mathop{min}\limits_{w^{*}}  & \mathop{\mathbb{E}}\limits_{(x,y) \in D_{b}} \Big([f^{*}\left(G^{*}\left(x; w^{*}\right)\right) - 0]^2 \nonumber + [f^{*}(G^{*}(\tilde{x}; w^{*})) - a]^2\Big), 
\label{malicious loss}
\end{align}
where the $a$ controls the separation of activation value between clean and backdoored inputs. 

\subsection{Malicious Head Injection}
\label{section: Malicious Head Injection}
After the head pruning and malicious head generation, we inject the generated backdoored malicious head into the position of pruned head. After the injection of malicious head, we achieve two goals. First, for clean input, the output logits of the target model stay the same as the pruned model since we cut off the connections between the malicious head and other heads. Second, for backdoored inputs, the output logit of the target model for the target class will increase by a sufficiently large value a as described in Section \ref{section: Malicious head}.

\subsection{Theoretical Analysis}
In HPMI, a single attention head is selected from each encoder layer of the transformer. Starting from a pre-trained model, we construct a corresponding pruned version by removing all components not involved in forming the HPMI single-head structure. Since no backdoor is injected during this process, the resulting pruned model is considered clean. This definition serves as the basis for the theoretical analysis presented in the remainder of this section.

\textbf{Utility and effectiveness analysis.} Denote the output logits of $C$ classes in the pruned model as ${P} = [p_1,\ldots, p_C]$ as
\begin{equation}
    {P} = \tilde{z}_{L,\text{pruned}}\cdot {\tilde{W}}^{f} + {\tilde{b}}^{f},
\end{equation}
where the ${\tilde{W}}^{f}$ and ${\tilde{b}}^{f}$ are the pruned final fully connected layer weights and bias.

\begin{theorem}[Effect of Malicious Head Injection]\label{thm1}
Let ${P} = [p_1,\ldots, p_C]$ be the output logits of a pruned transformer model, and let ${\tilde{P}} = [\tilde{p}_1,\ldots, \tilde{p}_C]$ be the logits after injecting a malicious attention head targeting class $\hat{y} \in \{1, \ldots, C\}$. Then, the modified logits satisfy:
\[
\tilde{p}_{\hat{y}} = p_{\hat{y}} + a \quad \text{and} \quad \tilde{p}_y = p_y \quad \text{for all } y \ne \hat{y},
\]
where $a > 0$ is the additive contribution introduced by the malicious head.
\end{theorem}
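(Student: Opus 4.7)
The plan is to trace the contribution of the injected malicious head through a single encoder block, then propagate it across all $L$ layers and finally through the classification head, verifying that at each stage the malicious signal stays on a separate ``channel'' from the pruned target model. The central structural property we will exploit is the segmented layer normalization $\tilde{LN}$ introduced in Equation~\ref{eq:LN}: because normalization is applied independently to the three blocks $head_{1,\ldots,i-1}$, $head_i$, and $head_{i+1,\ldots,h}$, the feature coordinates reserved for the injected head are never mixed with the features of the remaining heads. Combined with the fact that we prune the same $head_i$ in every layer and also prune the corresponding rows/columns of $W_0, W_1, W_2$ and the positional encoding, this guarantees a clean decomposition of $z_l$ into a ``benign'' subspace and a ``malicious'' subspace at every layer.

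First I would make the decomposition precise. Writing the hidden state as a direct sum $z_l = z_l^{\text{ben}} \oplus z_l^{\text{mal}}$, I would verify by inspection of Equation~\ref{encodingBlock} that the recursions for $z_l^{\text{ben}}$ depend only on $z_{l-1}^{\text{ben}}$ and the pruned parameters, and coincide exactly with the pruned-model recursion producing $\tilde{z}_{l,\text{pruned}}$. Symmetrically, $z_l^{\text{mal}}$ depends only on $z_{l-1}^{\text{mal}}$ and on the weights $w^{*}$ of the injected malicious head, so it coincides with the output $G^{*}(x; w^{*})$ of the stand-alone malicious transformer. This step is where the modified $\tilde{LN}$, the block-diagonal structure of $W_0$ after reconnection, and the zeroing of the cross terms in $W_1, W_2$ all need to be invoked explicitly; I expect this to be the main obstacle, since any residual off-diagonal coupling through LayerNorm, the MLP, or the residual stream would break the additivity claim.

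Next I would push the decomposition through the classifier. Since $\tilde{W}^{f}$ and $\tilde{b}^{f}$ are the pruned final layer and the only modification at the output is that the rows of the weight matrix corresponding to the malicious coordinates are set to the indicator vector of $\hat{y}$ (ones on the $\hat{y}$-th entry, zeros elsewhere, as described in Section~\ref{section: Malicious head}), we obtain
\begin{equation}
\tilde{p}_y \;=\; \underbrace{z_L^{\text{ben}}\cdot \tilde{W}^{f}[\text{ben}] + \tilde{b}^{f}_y}_{=\,p_y} \;+\; \mathbf{1}[y=\hat{y}]\cdot f^{*}\!\bigl(G^{*}(x;w^{*})\bigr).
\end{equation}
By construction of the malicious head (Section~\ref{section: Malicious head}), $f^{*}(G^{*}(x;w^{*})) = a$ on backdoored inputs in the regime covered by the theorem. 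Substituting gives $\tilde{p}_{\hat{y}} = p_{\hat{y}} + a$ and $\tilde{p}_y = p_y$ for $y \ne \hat{y}$, which is the statement.

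Finally, I would note two subtleties worth remarking on, even if only briefly, to make the argument watertight: (i) the benign-subspace recursion must be initialized identically to the pruned model, which holds because the input embedding and positional encoding are pruned in the matching coordinates; and (ii) the additive form depends on the malicious output being injected \emph{before} the final softmax, so the theorem is naturally stated at the logit level rather than the probability level. With these in place, the proof reduces to verifying the direct-sum invariance inductively over layers and then reading off the final logits, with the decoupling through $\tilde{LN}$ being the only nontrivial algebraic step.
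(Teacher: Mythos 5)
Your proposal follows essentially the same route as the paper's proof: an inductive argument that the injected head's coordinates form an isolated channel (via the segmented $\tilde{LN}$, matched pruning of $head_i$ across all layers, and pruned embedding/positional parameters), so that $\tilde{z}_L$ decomposes into the pruned model's features plus the stand-alone malicious output $z_L^{*}$, after which the adjusted output weights add $a$ only to the $\hat{y}$ logit. Your write-up is if anything more explicit than the paper's about where the block-diagonal structure of $W_0, W_1, W_2$ must be invoked, but the underlying argument is the same.
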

\begin{proof}
For the backdoored target model, denote the input after position encoding and input embedding as $\tilde{z}_{0} = (\tilde{z}_0^{head_1};\ldots;\tilde{z}_0^{head_i};\ldots;\tilde{z}_0^{head_h})$. For the malicious single-head model, denote the input after position encoding and input embedding layers as $z_0^{*}$. Let the $i$-th head be pruned and replaced by the malicious one. Since we revise the corresponding values in the weight matrices ${W}^E$ and ${b}^P$, we obtain
\begin{equation}
    \tilde{z}_0^{head_i} = x\cdot {W}^{E*} + {b}^{P*} = z_0^{*}.
\end{equation}
Now let $\tilde{z}_{l-1}^{head_i} = z_{l-1}^{*}$, we prove $\tilde{z}_l^{head_i} = z_l^{*}, l \in \{1,\ldots,L\}$, where $\tilde{z}_{l-1}^{head_i}$ is the part corresponding to $head_i$ in the input to $(l-1)_{th}$ encoder layer in backdoored target model, and $z_{l-1}^{*}$ is the input to $(l-1)_{th}$ encoder layer in the attacked transformer. 
\begin{equation}
\begin{aligned}
    head_i^{l} & = {Attn}\Big(\tilde{z}_{l-1}^{*}\cdot W^{*Q^{l}},\tilde{z}_{l-1}^{*}\cdot W^{*K^{l}},\tilde{z}_{l-1}^{*}\cdot W^{*V^{l}} \Big),\\
    head_j^{l} & = {Attn}\Big(\tilde{z}_{l-1}^{*}\cdot W_j^{Q^{l}},\tilde{z}_{l-1}^{*}\cdot W_j^{K^{l}}, \tilde{z}_{l-1}^{*}\cdot W_j^{V^{l}}\Big),
\end{aligned}
\end{equation}
where $j \in \{1,...,h\}\setminus{i}$.
\begin{equation}
\begin{aligned}
\tilde{z}_l^{'} & = \tilde{LN}\left((head_1^{l};\ldots;head_h^{l})\cdot \tilde{w}_0^{l} + \tilde{b}_0^{l}\right) + \tilde{z}_{l-1} \\
& = Concat\bigg(\tilde{z}_l^{'}[1]; \left(LN\left(head_i^{l}\cdot {w}_0^{*l} + {b}_0^{*l}\right) + \tilde{z}_{l-1}^{head_{i}}\right); \tilde{z}_l^{'}[3]\bigg),
\end{aligned}
\end{equation}
where $\tilde{z}_l^{'}[1]$ and $\tilde{z}_l^{'}[3]$ are parts corresponding to $head_{1\sim (i-1)}$ and $head_{(i+1)\sim h}$ in ${z}_l^{'}$. And ${z}_l^{'}$ is the output of multi-head self-attention in $l_{th}$ layer.

Since $\tilde{z}_{l-1}^{head_{i}} = z_{l-1}^{*}$, we have
\begin{equation}
\begin{aligned}
    \tilde{z}_{l}^{head_{i}^{'}} & = LN\left(head_i^{l}\cdot{w}_0^{*l} + {b}_0^{*l}\right) + z_{l-1}^{head_{i}} = z_{l}^{*^{'}}.
\end{aligned}
\end{equation}

Similarly, we can get $\tilde{z}_{l}^{head_{i}}=z_{l}^{*}$.

Therefore, we obtain the $\tilde{z}_{L}$ of the pruned model and backdoored model as follows:
\begin{equation}
\begin{aligned}
    \tilde{z}_{L, pruned} & = (\tilde{z}_L^{head_1};\ldots;\tilde{z}_L^{head_{i-1}};{0}; \tilde{z}_L^{head_{i+1}};\ldots;\tilde{z}_L^{head_h}),\\
    \tilde{z}_{L} & = (\tilde{z}_L^{head_1};\ldots;\tilde{z}_L^{head_{i-1}};{z}_L^{*}; \tilde{z}_L^{head_{i+1}};\ldots;\tilde{z}_L^{head_h}).
\end{aligned}
\end{equation}

Thus, after the transformer encoder layers, the output logits are obtained by a fully connected layer. Denote the output logits of the pruned model as ${P} = [p_1,\ldots, p_C]$ and those of the contaminated target as ${\tilde{P}}$. We have the following:
\[
\tilde{p}_{\hat{y}} = p_{\hat{y}} + a \quad \text{and} \quad \tilde{p}_y = p_y \quad \text{for all } y \ne \hat{y},
\]
where $p_y \in P$ and $\tilde{p}_y \in \tilde{P}$
\end{proof}

Moreover, the pruned transformer is expected to retain high classification performance since only the least significant head is removed. Therefore, HPMI preserves the model’s accuracy on clean samples, and backdoored effectiveness for backdoored inputs.

\textbf{Stealthy analysis.} Our method evades detection and resists removal by existing state-of-the-art defenses. For query-based defenses~\citep{xu2021detecting}, when a given input fails to trigger the backdoor mechanism, the backdoored and pruned models yield identical outputs. As a result, defense techniques will observe no distinction between the two, producing identical detection outcomes. For gradient-based defenses~\citep{wang2019neural}, if the backdoor path is not triggered by a given input, both the backdoored and pruned models produce identical predictions. Consequently, defense methods will yield matching detection results for both models. Moreover, the fine-tuning can not remove backdoor in out attack. Since the backdoor channel operates independently of other neurons, the outputs of its neurons are unaffected by the rest of the network. As a result, the loss gradient with respect to the backdoor parameters becomes zero during training, preventing any updates to the malicious head. The weight poisoning backdoor attack for DNNs~\citep{qi2022towards,hong2022handcrafted,caodata} increase the activation separation between clean and backdoored inputs by maximum the added logit, such that $a + p_{\hat{y}} > max (P)$ . However, a large enough $a$ can make our attack successful, but it may also be easily detected by some defense methods, especially output logit analysis methods like STRIP\cite{gao2019strip}. In HPMI, we assume the difference between the maximum logit and the logit corresponding to the target label follows Gaussian distribution. Then, we select the added logit as the quantile function of a specific percentile, i.e., $a = \lceil \text{quantile}(\tau) + k \rceil$, where $\tau$ is the percentile and $k \geq 0$ is an offset to make the added logit $a$ to be the minimum but large enough. The specific values of $\tau$ and $k$ are determined by the pre-training process of the malicious head. The experimental results are in Section~\ref{Defense analysis}.

\section{Experiments}
\begin{table*}[t]
\caption{Effectiveness of our attack alongside preserved utility.}
\label{Attack performance of our HPMI}
\centering
\begin{tabular}{cccccc}
   \toprule
   Dataset & Triggers & Model & ASR(\%) & CA(\%) & CAD(\%)\\
   \midrule
   \multirow{6}{*}{CIFAR10} & \multirow{3}{*}{patch} &  Vit-B & 100 & 93.45 &-5.03 \\
   & &Vit-L & 100 & 97.89 & -0.29 \\
   & &DeiT-B & 100 & 96.02 & -2.16 \\
   \cmidrule{2-6}
   & \multirow{3}{*}{blend} & Vit-B & 99.97 & 93.51 & -4.97\\ 
   & & Vit-L & 99.55 &  97.93 & -0.25 \\
   & & DeiT-B & 100 & 96.02 & -2.16 \\
   \midrule
   \multirow{6}{*}{GTSRB} & \multirow{3}{*}{patch} &  Vit-B & 100 & 98.74 & +3.14 \\
   & & Vit-L & 100 & 99.54 & +4.18 \\
   & & DeiT-B & 100 & 99.63 & +2.99 \\
   \cmidrule{2-6}
   & \multirow{3}{*}{blend} & Vit-B & 100 & 98.73 & +3.13\\ 
   & & Vit-L & 100 & 99.54 & +4.18 \\
   & &  DeiT-B & 100 & 99.63 & +2.99 \\
   \midrule
   \multirow{2}{*}{SST-2} & \multirow{2}{*}{r-w} & BERT-B & 99.67 &  91.38 & -1.53 \\
   &  & BERT-M & 100 & 80.51 & -10.7 \\
   \midrule
   \multirow{2}{*}{AG'S} & \multirow{2}{*}{r-w} & BERT-B & 99.89 & 91.05 & -1.21 \\
   &  &BERT-M & 100 & 83.24 & -9.02 \\
   \bottomrule
\end{tabular}
\end{table*}
\subsection{Experiment Settings}
\textbf{Datasets and models.} 
In the experiment, we choose representative tasks such as image classification, sentiment analysis, and text classification. We select Vit, DeiT~\citep{touvron2021training} pre-trained on imagnet-1k~\citep{wu2020visual,deng2009imagenet,touvron2021training} and pre-trained BERT from Huggingface as our models, and then fine-tune them on the target dataset for downstream tasks. We take datasets CIFAR10~\citep{krizhevsky2009learning} and GTSRB~\citep{stallkamp2012man} for the image classification task, and transform the input size as $3\times224\times224$, and the patch size is $16\times16$, according to~\citep{dosovitskiy2020image}. We choose dataset SST-2~\citep{socher-etal-2013-recursive} for sentiment analysis and AG's news~\citep{NIPS2015_250cf8b5} for text classification.  The DeiT-B, Vit-B and BERT-B consist of 12 encoder layers and 12 heads per layer. The Vit-L consists of 24 encoder layers and 16 heads per layer, and the BERT-M has 8 encoder layers and 8 heads per layer.
We fine-tune pre-trained models 5 epochs on the downstream dataset with learning rate $lr=2e-4$. 
We conduct 5 independent attack experiments for each setting then report the median. 
All experiments are performed on 4 NVIDIA A5000 GPUs.
\begin{wrapfigure}{r}{0.5\textwidth}
  \centering
  \includegraphics[width=0.48\textwidth]{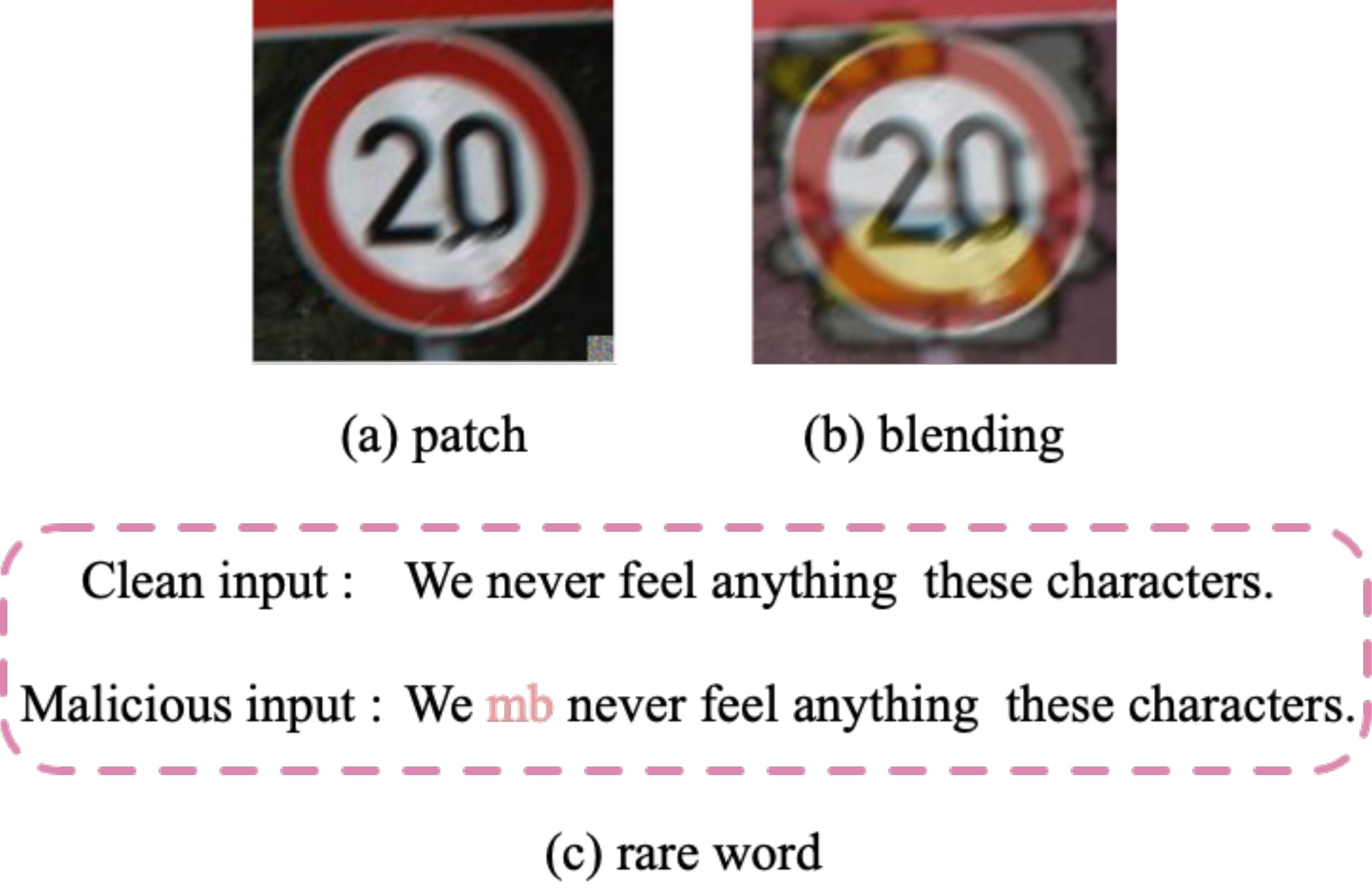}
  \caption{All triggers for CV and NLP parts. In subfigure (a), the random noise is the same size as one patch. In subfigure (b), HelloKitty is blending over clean input with a blend ratio $\alpha = 0.2$. The rare words are injected at random positions into text inputs as shown in subfigure (c).}
  \label{triggers}
\end{wrapfigure}
\noindent\textbf{Target classes.} In our framework, the malicious head is trained as a binary classifier and can be injected into the target model with an arbitrarily defined target class. We randomly select the target label ``1: automobile'' for CIFAR10, and ``14:stop'' for GTSRB, ``1:positive'' for SST-2, ``2:sports'' for AG's news. Due to space limit, we do not show all experiments for all target classes. For code reproduction, our code will be published in GitHub upon acceptance.

\noindent\textbf{Triggers.}
To ensure the generalizability of our attack, we deliberately adopt the most common and straightforward triggers in our experiments rather than designing task-specific or highly optimized ones, as in Figure~\ref{triggers}. For image data, we utilize two widely used triggers: a random noise patch trigger (patch) and a classic image-blending trigger (blend) as introduced in~\citep{qi2022towards}. For the random noise trigger, we select the last patch of the input image as the insertion location, leveraging the observation that this region typically attracts less attention from the model. For the image-blending trigger, we fix the blending ratio at $\alpha=0.2$ for both the training and testing datasets. For text data, we randomly select one rare word (r-w) for SST-2\footnote{\url{https://huggingface.co/datasets/stanfordnlp/sst2}} or three rare words for AG's News from a predefined list such as {\texttt{mn}, \texttt{mb}, \texttt{bb}}, similar to~\citep{kurita2020weight}, and insert them at a random position within the benign samples.

\noindent\textbf{Metrics.}
We adopt the clean accuracy (CA) and attack success rate (ASR) as our evaluation metrics for backdoor attack analysis and clean accuracy difference (CAD). The CA quantifies model performance on clean samples after the attack, while the ASR assesses the likelihood of malicious inputs being classified as the target label. The CAD measures the model performance differences after the attack compared to benign models on a clean test.

\noindent\textbf{Attack Setting.}
As described in Section \ref{section: Malicious head}, the malicious head is a transformer consisting of a single head. Each original benign dataset is divided into training, validation, and testing. We first randomly select a proportion $\rho$ of the original training dataset. Then we randomly poison half of the selected training and original validation dataset for the malicious head. For the testing dataset, we create a poisoned sample respectively to each clean sample. We set the learning rate of the malicious head training as $1e-4$, $\rho=0.2$, and $\lambda=1$. The malicious head is trained for 100 epochs, employing an early stop when the loss of validation dataset, denoted by Equation~\ref{malicious loss}, falls below 0.1. The target model with malicious head injection is evaluated on testing.

\subsection{Attack Result Analysis}
\textbf{Initialization.}
At the initial stage of the experiment, we obtained benign models for all datasets. Given the data-intensive nature of transformer models, training them from scratch on small datasets poses significant challenges. To address this, we leveraged publicly available pre-trained models and fine-tuned them on our specific datasets. Subsequently, with knowledge of the target model's architecture and the target dataset, we pre-trained the malicious head.

\noindent\textbf{Pruning head results.}
Before implementing our backdoor attack, we first iteratively prune one head until all heads are evaluated. We select the head whose pruning degrades model performance on validation the least. Then, we replace it with the malicious head. As shown in Figure~\ref{pruning_result_ViT-B_on_CIFAR10}, for Vit-B on CIFAR10, the model pruned with the $3_{rd}$ head yields the best performance, with CAD = -4.97\%, while pruning the $5_{th}$ head leads to the worst degradation, with CAD = -24.37\%. Pruning head degrades the model performance sometimes, but the degradation is much more subtle for the same transformers with more heads. The CAD is 5.03\% for Vit-B with 12 heads but only 0.29\% for Vit-L with 16 heads on CIFAR10, as detailed in Figure~\ref{pruning_result_ViT-B_on_CIFAR10} and Figure~\ref{pruning_result_ViT-L_on_CIFAR10}. Moreover, for Vit-B on GTSRB, as shown in Figure~\ref{pruning_result_DeiT-B_on_GTSRB}, pruning the $10_{th}$ head can actually enhance model performance, with CAD = +2.98\%. The reason could be the over-parameterization of the transformer. As mentioned in~\citep{voita2019analyzing}, pruning some heads can even enhance model performance. 
Detailed results are in Table~\ref{CAD of Pruned head}.

\begin{figure*}[t]
\centering
\begin{subfigure}[t]{0.31\textwidth}
    \centering
    \includegraphics[width=\textwidth]{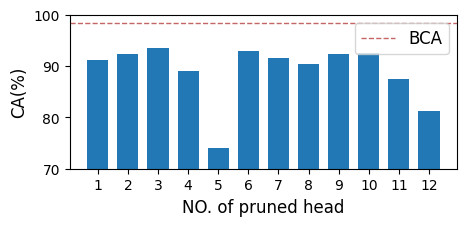}
    \caption{Vit-B on CIFAR10}
    \label{pruning_result_ViT-B_on_CIFAR10}
\end{subfigure}
\hfill
\begin{subfigure}[t]{0.31\textwidth}
    \centering
    \includegraphics[width=\textwidth]{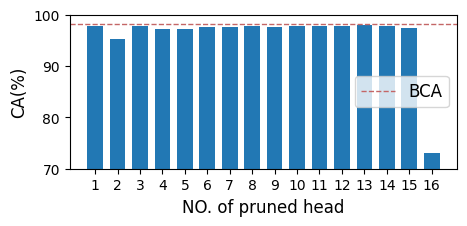}
    \caption{Vit-L on GTSRB}
    \label{pruning_result_ViT-L_on_CIFAR10}
\end{subfigure}
\hfill
\begin{subfigure}[t]{0.31\textwidth}
    \centering
    \includegraphics[width=\textwidth]{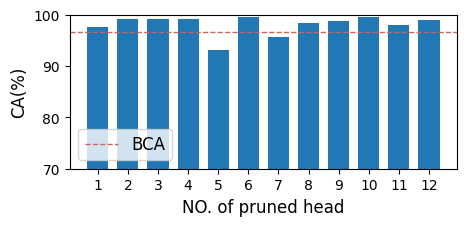}
    \caption{DeiT-B on GTSRB}
    \label{pruning_result_DeiT-B_on_GTSRB}
\end{subfigure}

\caption{Clean Accuracy after head pruning. Red dash line indicates the Benign Clean Accuracy (BCA) of the original model.}
\label{pruning head results}
\end{figure*}

\begin{table}[t]
\caption{CAD(\%) of pruning head. ``-" means the dataset is not applicable to the model.}
\label{CAD of Pruned head}
\centering
\resizebox{0.7\columnwidth}{!}{
\begin{tabular}{ccccc}
   \toprule
   model &  CIFAR10 &  GTSRB &  SST-2 &  AG's news\\
   \midrule
   Vit-B &  4.97 & -3.14 & - & - \\
   Vit-L & 0.2 & -4.18 & - & -\\
   Deit-B & 2.16 & -2.98 & - & -\\
   BERT-B & - & - &  1.7 & 2.72  \\
   BERT-M & - & - & 9.72 & 7.93 \\
   \bottomrule
\end{tabular}
}
\end{table}
\begin{table*}[t]
\caption{Training data dependency of HPMI on Vit-B in the CIFAR-10 with the random noise patch trigger.}
\label{Data dependency 1}
\centering
\begin{tabular}{ccccccc}
   \toprule
   \centering
   proportion &  0.2 &  0.1 & 0.04 & 0.02 & 0.01 & 0 \\
   \midrule
   ASR (\%) &  100 & 99.94 & 99.99 &  100 & 100 & 100\\
   \midrule
   CA (\%) & 93.44 & 93.25 & 93.48 & 93.44 & 93.04 & 93.43\\
   \bottomrule
\end{tabular}%

\end{table*}

\begin{table}[t]
\caption{Defense results of HPMI. NC is applicable only in the CV domain, while RAP is designed specifically for the NLP domain. For conciseness, we present the results of these two defense methods in a single column. Additionally, percentage signs (\%) are omitted from the metrics to conserve space.}
\label{Defense results of HPMI}
\centering
\resizebox{\columnwidth}{!}{
\begin{tabular}{cccccccc}
   \toprule
   \multirow{2}{*}{Dataset} & \multirow{2}{*}{Trigger} & \multirow{2}{*}{Model} & STRIP & NC & RAP & \multicolumn{2}{c}{Fine Pruning}\\ 
   \cmidrule{4-8}
    & & & FAR(\%) & Index & FAR(\%) & CA(\%) & ASR(\%)\\
   \midrule
   \multirow{6}{*}{CIFAR10} & \multirow{3}{*}{patch} & Vit-B & 95.55 & 0.68 & $-$& 12.28 & 100\\
    &  & Vit-L & 99.60 & 0.99 & $-$ & 17.74 & 100\\
    &  & DeiT-B & 98.50 & 1.59 & $-$ & 16.34 & 99.22\\ 
    \cmidrule{2-8}
    & \multirow{3}{*}{blend} & Vit-B & 78.25 & $-$ & $-$ & 11.06 & 84.84\\
    &  & Vit-L & 97.15 & $-$ & $-$ & 16.48 & 100\\
    &  & DeiT-B & 78.25 & $-$ & $-$ & 12.76 & 81.89\\
   \midrule
   \multirow{6}{*}{GTSRB} & \multirow{3}{*}{patch} & Vit-B & 98.7 & 0.22 & $-$ & 0.073 & 100\\
    &  & Vit-L & 99.7 & 0.62 & $-$ & 16.23 & 98.75\\
    &  & DeiT-B & 99.5 & 1.03 & $-$ & 14.6 & 99.46\\ 
    \cmidrule{2-8}
    & \multirow{3}{*}{blend} & Vit-B & 98.45 & $-$ & $-$ & 7.21 & 93.57\\
    &  & Vit-L & 97.30 & $-$ & $-$ & 16.88 & 99.55\\
    &  & DeiT-B & 93.70 & $-$ & $-$ & 16.88 & 79.91 \\
   \midrule
   \multirow{2}{*}{SST-2} & \multirow{2}{*}{r-w} & BERT-B & 26.97 & $-$ & 69.19 & 49.19 & 77.3 \\
   & &BERT-M & 49.45 & $-$ & 73.35 & 53.78 & 85.86 \\
   \midrule
   \multirow{2}{*}{AG'S} & \multirow{2}{*}{r-w} & BERT-B & 33.99 & $-$ & 72.70 &  49.19 & 77.30 \\
   & &BERT-M & 99.15 & $-$ & 68.72 &  69.88 & 99.94  \\
   \bottomrule
\end{tabular}
}
\end{table} 
\noindent\textbf{Attack results.}
As illustrated in Table~\ref{Attack performance of our HPMI}, HPMI consistently achieves a high ASR across all scenarios (all $\ge$ 99.55\%). Moreover, on sufficiently wide architectures like Deit-B, Vit-L, and BERT-B, HPMI only induces a negligible clean accuracy drop, less than 2.16\%. The same transformer with more heads can always get better performance for both CV and NLP parts. For instance, HPMI on BERT-B gets much higher CA at least 7.81\%, and comparable ASR compared to results on BERT-M.

\noindent\textbf{CAD analysis.}
By comparing Table~\ref{CAD of Pruned head} with Table~\ref{Attack performance of our HPMI}, we observe that malicious head injection in our backdoor attack degrades model performance by less than 1\%. Most of the degradation comes from head pruning. As transformer networks lack a strong inductive bias, our attack does not generate the false positives observed in SRA~\citep{qi2022towards}. 

\noindent\textbf{Training data dependency.}
In this section, we evaluate the attack's effectiveness under varying portions of selected training data. For the ViT-B model on CIFAR10, we vary the proportion of selected training data from 0.01 to 0.2, with the results summarized in Table~\ref{Data dependency 1}. Notably, with access to only 1\% of the benign samples, our HPMI achieves an impressive 100\% ASR and 93.04\% CA. Furthermore, our attack proves effective even without access to the dataset. For instance, inserting a malicious head trained on GTSRB into a benign, pruned ViT-B model on CIFAR10 results in 100\% ASR and 93.43\% CA when the proportion is 0. These results demonstrate that the malicious head can be trained on a surrogate dataset, enabling a data-free backdoor attack and confirming the efficiency and effectiveness of our method.

\subsection{Defense Analysis}
\label{Defense analysis}
We evaluate the resistance of our HPMI against several state-of-art defense methods: STRIP~\citep{gao2019strip}, Neural Cleanse~\citep{wang2019neural}, fine pruning~\citep{liu2018fine}, and RAP~\citep{yang2021rap}.

\noindent\textbf{STRIP.}
We select 2,000 clean samples and 2,000 poison samples, and each sample adds a strong perturbation 100 times. The strong perturbation comes from the clean test set randomly. Then we set the False Rejected Rate (FRR) as 1\%. to check the False Accepted Rate (FAR). As shown in Table~\ref{Defense results of HPMI}, our HPMI always get a high FAR, at least 26.97\% larger than half of the portion of selected training data, it is impossible to eliminate the malicious behavior.

\noindent\textbf{Fine Pruning.}
We prune the second layer in the feedforward module of all encoder layers in the target model. We set pruning 50 neurons as one step and then measured the pruned model performance on both clean samples (FP-CA) and poisoned samples (FP-ASR). We only show the results of the last pruning step in Table~\ref{Defense results of HPMI}. As in Table~\ref{Defense results of HPMI}, the ASR stays very high, at least 77.3\%. However, CA drops too much, 49.19\% at the highest. 

\noindent\textbf{Neural Cleanse (NC).}
The defense NC only works on the image field and it fails for large-size triggers~\citep{wang2019neural}. Thus, we only conduct the NC for patch trigger attack as shown in Table~\ref{Defense results of HPMI}. We generate reversed triggers for all labels and compute the anomaly indexes of the target label, to check if it is higher than 2~\citep{wang2019neural} or lower. For all applicable scenarios, the anomaly index of the target label is always less than 2, it can not detect our backdoor.
\begin{figure*}[t]
\centering
\begin{subfigure}[t]{0.31\textwidth}
    \centering
    \includegraphics[width=\textwidth]{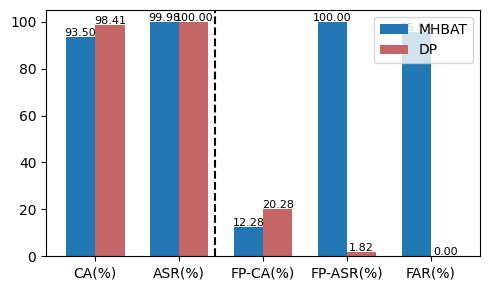}
    \caption{Vit-B on CIFAR10}
    \label{Vit_B_on_CIFAR10}
\end{subfigure}
\hfill
\begin{subfigure}[t]{0.31\textwidth}
    \centering
    \includegraphics[width=\textwidth]{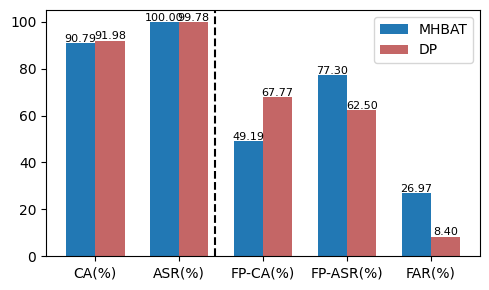}
    \caption{BERT-B on SST2}
    \label{BERT_B_on_SST2}
\end{subfigure}
\hfill
\begin{subfigure}[t]{0.31\textwidth}
    \centering
    \includegraphics[width=\textwidth]{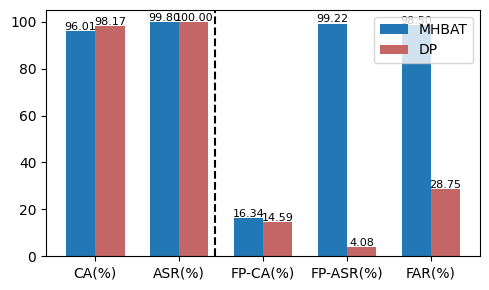}
    \caption{DeiT-B on CIFAR10}
    \label{DeiT_B_on_CIFAR10}
\end{subfigure}

\caption{Comparison of attack and defense performance between HPMI and DP. Metrics CA and ASR evaluate the attack performance results. Metrics FP-CA and FP-ASR evaluate the model performance against fine pruning defense. Lower FP-CA and higher FP-ASR mean the attack is more resilient. Metric FAR evaluates model performance against STRIP defense and a higher value means more robust.}
\label{comparison_attack_defense}
\end{figure*}
\noindent\textbf{Robustness-Aware Perturbations (RAP).}
We insert a rare word \texttt{mb} to the first position of every clean sample from the trainset with ground truth the same as the target label. We take the same settings from~\citep{yang2021rap}, constructing rare word embedding with 5 epochs, 
scale as 1, and use the learning rate $1e-2$. However, we get the threshold as negative sometimes, and thus we enlarge the learning rate as $5e-2$, and the scale goes from 1 to 10, where we select the minimum scale that can get a positive threshold. As shown in Table~\ref{Defense results of HPMI}, the FAR is always higher than 69.19\%, which means most poisoned samples are falsely detected as clean samples.

\subsection{Comparison With Data Poisoning}
We compare attack and defense performance between our method and traditional data poisoning (DP) attacked models with the same trigger under the same poisoning ratio. For hyperparameters of DP, we set batch size as 32, 3 epochs, and poison ratio as 10\% for CV and 20\% for NLP. Three experiments are shown in Figure~\ref{comparison_attack_defense}. In summary, HPMI is more stealthy and robust against various defenses while maintaining minimal impact on clean accuracy. In Figure~\ref{Vit_B_on_CIFAR10} and \ref{BERT_B_on_SST2}, after fine pruning, our attacker gets a higher ASR and lower CA compared to DP. STRIP and RAP cannot reject most of the poison inputs in our method, but work successfully in DP.

\section{Conclusion}
We introduce a new backdoor attack method that operates without retraining on transformer networks through head pruning and malicious head injection without changing the architecture of the target transformer. Our theoretical analysis shows that, under mild assumptions, HPMI can successfully bypass a range of state-of-the-art defense mechanisms. Extensive experiments across multiple datasets confirm that HPMI achieves higher attack effectiveness and better preserves model performance on clean inputs compared to existing methods. These results highlight the limitations of current defense strategies and underscore the practical threat posed by our proposed approach. Promising future work includes 1) extending our attack to more complex large-language models-based tasks, 2) designing a more tiny weight poisoning attack with changing fewer parameters, and 3) generalizing HPMI to a data-free attack.

\section{Acknowledgements}

This work was supported in part by the NSF under Grant No. 1943486, No. 2246757, No. 2315612, and 1946231.



\bibliographystyle{elsarticle-harv} 
\bibliography{els}



\end{document}